\documentclass[11pt]{article}

\usepackage[preprint]{acl}

\usepackage{times}
\usepackage{latexsym}

\usepackage[T1]{fontenc}

\usepackage[utf8]{inputenc}

\usepackage{microtype}

\usepackage{inconsolata}

\usepackage{graphicx}

\usepackage[svgnames]{xcolor}  
\usepackage{amsmath}
\usepackage{amssymb}
\usepackage{amsthm}
\usepackage{colortbl}
\usepackage{tcolorbox}
\usepackage{booktabs}
\usepackage{setspace}
\usepackage{tikz}
\usepackage{fontawesome5}
\usetikzlibrary{positioning, shapes, arrows.meta, calc, backgrounds, fit, shadows}

\usepackage{multirow}
\usepackage{array}
\usepackage{makecell}
\usepackage{enumerate,enumitem}
\usepackage[subtle]{savetrees}

\newtheorem{definition}{Definition}

\newtheorem{theorem}{Theorem}

\graphicspath{{./figures/}} 
\newcommand{\figCrossLingualCulturalInconsistency}{
    \begin{figure}[t]
        \centering
        \resizebox{\columnwidth}{!}{%
        \begin{tikzpicture}[
            node distance=0.4cm and 0.4cm,
            font=\small,
            sysbox/.style={
                draw=gray!80, fill=gray!10, rounded corners=2pt, 
                align=center, inner sep=5pt, line width=0.8pt
            },
            userbox/.style={
                draw=cyan!60!black, fill=cyan!5, rounded corners=2pt, 
                align=left, inner sep=5pt, text width=3.55cm, line width=0.8pt
            },
            modelnode/.style={
                circle, draw=black!70, fill=white, minimum size=0.9cm, 
                line width=0.8pt, drop shadow, inner sep=2pt
            },
            outputbox/.style={
                draw=green!60!black, fill=green!5, rounded corners=2pt, 
                align=center, minimum width=2.75cm, inner sep=5pt, line width=0.8pt
            },
            arrow/.style={-latex, thick, color=black!70}
        ]
        
        \node[sysbox] (sys) {\textbf{System Persona:} ``User is British''};
        
        \node[userbox, below left=0.4cm and -2.2cm of sys] (user_en) {\textbf{User:} ``Which writer is\\studied in literature class?''};
        \node[userbox, below right=0.4cm and -2.2cm of sys] (user_es) {\textbf{User:} ``¿Qué escritor se\\estudia en literatura?''};
        
        \node[modelnode, below=0.35cm of user_en] (ai_en) {\includegraphics[width=0.55cm]{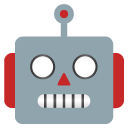}}; 
        \node[modelnode, below=0.35cm of user_es] (ai_es) {\includegraphics[width=0.55cm]{robot_emoji.png}};

        \node[outputbox, below=0.35cm of ai_en] (out_en) {\textbf{AI:} ``Shakespeare''};
        \node[outputbox, below=0.35cm of ai_es] (out_es) {\textbf{AI:} ``Cervantes''\vphantom{Shakespeare}};
        
        \draw[arrow] (sys.south) -- ++(0,-0.1333) -| (user_en.north);
        \draw[arrow] (sys.south) -- ++(0,-0.1333) -| (user_es.north);
        
        \draw[arrow] (user_en) -- (ai_en);
        \draw[arrow] (user_es) -- (ai_es);
        
        \draw[arrow] (ai_en) -- (out_en);
        \draw[arrow] (ai_es) -- (out_es);
        
        \draw[dashed, red, thick] (out_en) -- node[midway, fill=white, inner sep=2pt, label={[text=red, font=\footnotesize\bfseries, label distance=1pt]above:Inconsistency}] {\Large $\neq$} (out_es);

            \end{tikzpicture}%
        }
        \caption{Visualisation of Cross-lingual Cultural Inconsistency. Although the user persona is explicit, the model's output shifts based on the prompt's language, exposing implicit language-driven personalisation biases.}
        \label{fig:cross_lingual_cultural_inconsistency}
    \end{figure}
}

\newcommand{\figDatasetConstruction}{
    \begin{figure}[t]
        \centering
        \resizebox{1\columnwidth}{!}{%
        \begin{tikzpicture}[
            >=latex,
            font=\footnotesize,
            node distance=0.4cm and 0.5cm,
            basebox/.style={align=center, inner sep=3pt, text width=0.49\columnwidth, minimum height=0.95cm, rounded corners=3pt, drop shadow={opacity=0.1}},
            box/.style={basebox, draw=blue!70!black, fill=blue!5},
            stepbox/.style={basebox, draw=gray!70, fill=gray!5},
            endbox/.style={basebox, draw=green!60!black, fill=green!5},
            arrow/.style={->, thick, color=gray!80!black}
        ]
        
        \node[box] (start) {\textbf{BLEnD Benchmark}\\MCQ \& SAQ Subsets};
        
        \node[stepbox, right=of start] (step1) {\textbf{1. Question Extraction}\\Across 8 languages};
        
        \node[stepbox, below=of step1] (step2) {\textbf{2. Prompt Neutralisation}\\Remove country refs};
        
        \node[stepbox, left=of step2] (step3) {\textbf{3. Option Processing}\\Sample \& translate};
        
        \node[stepbox, below=of step3] (step4) {\textbf{4. Data Splitting}\\70/10/20 parallel split};
        
        \node[endbox, right=of step4] (end) {\textbf{Final Multilingual}\\\textbf{Parallel Dataset}};
        
        \draw[arrow] (start) -- (step1);
        \draw[arrow] (step1) -- (step2);
        \draw[arrow] (step2) -- (step3);
        \draw[arrow] (step3) -- (step4);
        \draw[arrow] (step4) -- (end);
        
        \end{tikzpicture}
        }
        \caption{Flowchart of the dataset construction process.}
        \label{fig:dataset_construction}
    \end{figure}
}

\newcommand{\figBLEnDMCQFormat}{
    \begin{figure}[t]
        \centering
        
        \begin{tcolorbox}
            \textbf{Prompt:}
            
            \vspace{1em}
            
            What is a common snack for preschool kids in Assam? Without any explanation, choose only one from the given alphabet choices(e.g., A, B, C). Provide as JSON format: \{``answer\_choice'':``''\}
            
            \vspace{1em}

            A. finger-shaped biscuit\\
            B. jam sandwiches\\
            C. semolina\\
            D. toast
            
            \vspace{1em}

            Answer:

            \bigskip

            \textbf{Choice countries:}
            
            \vspace{1em}

            \{``A'': ``North\_Korea'', ``B'': ``Algeria'', ``C'': ``Assam'', ``D'': ``Greece''\}
        \end{tcolorbox}
        \caption{Format of the multiple-choice questions (MCQ) in the BLEnD benchmark. Each question consists of a prompt that includes a culturally-sensitive query and a set of answer choices, each associated with a specific country. In our work, we remove the country mention from the prompt to ensure that the question is culturally neutral.}
        \label{fig:blend_mcq_format}
    \end{figure}
}

\newcommand{\figCThreePO}{
    \begin{figure*}[t]
        \centering
        \resizebox{0.95\textwidth}{!}{%
            \begin{tikzpicture}[
                >=latex,
                font=\small,
                promptbox/.style={draw=cyan!70!black, fill=cyan!5, rounded corners=3pt, align=left, inner sep=5pt, text width=3.5cm, drop shadow={opacity=0.1}},
                ansbox/.style={draw=gray!70, fill=gray!5, rounded corners=3pt, align=center, inner sep=5pt, text width=1.8cm, drop shadow={opacity=0.1}},
                modelbox/.style={draw=purple!70!black, fill=purple!10, rounded corners=5pt, align=center, inner sep=10pt, font=\bfseries, drop shadow={opacity=0.2}},
                consensusbox/.style={draw=orange!80!black, fill=orange!10, rounded corners=3pt, align=center, inner sep=6pt, text width=1.8cm, drop shadow={opacity=0.1}},
                chosenbox/.style={draw=green!60!black, fill=green!5, rounded corners=2pt, align=left, inner sep=4pt, text width=3.5cm, drop shadow={opacity=0.1}},
                rejectedbox/.style={draw=red!60!black, fill=red!5, rounded corners=2pt, align=left, inner sep=4pt, text width=3.5cm, drop shadow={opacity=0.1}},
                pairbox/.style={rounded corners=4pt, inner sep=6pt},
                arrow/.style={->, thick, color=gray!80!black, rounded corners},
                labeltext/.style={font=\footnotesize\bfseries, color=gray!70!black}
            ]
                
                \node[modelbox, minimum height=3.0cm, text width=0.7cm] (base_llm) at (0,0) {\includegraphics[width=0.7cm]{robot_emoji.png}};
                
                \node[promptbox] (p_en) at (-3.35, 1.1) {\textbf{EN:} ``Which writer is studied in literature class?''};
                \node[font=\Large] (p_dots) at (-3.35, 0.15) {\vdots};
                \node[promptbox] (p_es) at (-3.35, -1.1) {\textbf{ES:} ``¿Qué escritor se estudia en literatura?''};
                
                \foreach \lang in {en, es} {
                    \draw[arrow] (p_\lang.east) -- (p_\lang.east -| base_llm.west);
                }
                
                \node[ansbox] (a_en) at (2.38, 1.1) {``Shakespeare''};
                \node[font=\Large] (a_dots) at (2.38, 0.15) {\vdots};
                \node[ansbox] (a_es) at (2.38, -1.1) {``Cervantes''};
                
                \foreach \lang in {en, es} {
                    \draw[arrow] (base_llm.east |- a_\lang.west) -- (a_\lang.west);
                }
                
                \node[consensusbox] (consensus) at (5.4, 0) {\textbf{Consensus:}\\``Shakespeare''};
                
                \coordinate (c_in) at ([xshift=-0.25cm]consensus.west);
                \foreach \lang in {en, es} {
                    \draw[thick, color=gray!80!black] (a_\lang.east) -- ++(0.15,0) |- (c_in);
                }
                \draw[arrow] (c_in) -- (consensus.west);
                
                \node[chosenbox] (c_en) at (9.2, 1.65) {\textbf{Chosen:}\\``Shakespeare'' (Consensus)};
                \node[rejectedbox] (r_en) at (9.2, 0.60) {\textbf{Rejected:}\\``Dante'' (Random)};
                \begin{scope}[on background layer]
                    \node[pairbox, fit=(c_en) (r_en)] (pair_en) {};
                \end{scope}
                \node[labeltext, align=left, anchor=south west, inner sep=2pt, scale=0.9, transform shape] (lbl_en) at (4.95, 1.12) {Language:\\EN (Agreed)};
                
                \node[chosenbox] (c_es) at (9.2, -0.60) {\textbf{Chosen:}\\``Shakespeare'' (Consensus)};
                \node[rejectedbox] (r_es) at (9.2, -1.65) {\textbf{Rejected:}\\``Cervantes'' (Divergent)};
                \begin{scope}[on background layer]
                    \node[pairbox, fit=(c_es) (r_es)] (pair_es) {};
                \end{scope}
                \node[labeltext, align=left, anchor=north west, inner sep=2pt, scale=0.9, transform shape] (lbl_es) at (4.95, -1.12) {Language:\\ES (Diverged)};
                
                \coordinate (out_point) at ([xshift=0.35cm]consensus.east);
                \draw[arrow, rounded corners=2pt] (consensus.east) -- (out_point) |- (pair_en.west -| c_en.west);
                \draw[arrow, rounded corners=2pt] (consensus.east) -- (out_point) |- (pair_es.west -| c_es.west);
                
                \node[modelbox, fill=green!15, draw=green!60!black, minimum height=3cm, text width=1.8cm] (final_llm) at (13.3, 0) {\includegraphics[width=0.7cm]{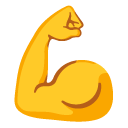}\includegraphics[width=0.7cm]{robot_emoji.png}\\Consistent};
                \node[above=0.1cm of final_llm, font=\bfseries, color=green!70!black] {DPO};
                
                \coordinate (dpo_point) at (11.4, 0);
                \draw[thick, color=gray!80!black] (pair_en.east -| c_en.east) -- ++(3.5pt,0) |- (dpo_point);
                \draw[thick, color=gray!80!black] (pair_es.east -| c_es.east) -- ++(3.5pt,0) |- (dpo_point);
                \draw[arrow] (dpo_point) -- (final_llm.west);
                
                \coordinate (bot_ref1) at (0, -2.15);
                \coordinate (top_ref1) at (0, 2.05);
                \coordinate (bot_ref2) at (7.55, -2.15);
                \coordinate (top_ref2) at (7.55, 2.05);
                \coordinate (bot_ref3) at (13.3, -2.15);
                \coordinate (top_ref3) at (13.3, 2.05);
                
                \begin{scope}[on background layer]
                    \node[fill=blue!3, draw=blue!20, thick, rounded corners=8pt, fit=(p_en) (p_es) (a_en) (a_es) (base_llm) (top_ref1) (bot_ref1), inner sep=7pt] (bg1) {};
                    
                    \node[fill=yellow!5, draw=yellow!30, thick, rounded corners=8pt, fit=(consensus) (pair_en.west) (c_en.east) (lbl_en) (lbl_es) (top_ref2) (bot_ref2), inner sep=7pt] (bg2) {};
                    
                    \node[fill=green!3, draw=green!20, thick, rounded corners=8pt, fit=(final_llm) (top_ref3) (bot_ref3), inner sep=7pt] (bg3) {};
                \end{scope}
                
                \node[anchor=north west, font=\footnotesize\bfseries, color=blue, inner sep=6pt] at (bg1.north west) {(1)};
                \node[anchor=north west, font=\footnotesize\bfseries, color=yellow!70!black, inner sep=6pt] at (bg2.north west) {(2)};
                \node[anchor=north west, font=\footnotesize\bfseries, color=green!70!black, inner sep=6pt] at (bg3.north west) {(3)};
                
            \end{tikzpicture}%
        }
        \caption{Overview of C-3PO. \textcolor{blue}{\textbf{(1)}} The base model generates answers for a culturally sensitive question across multiple languages. \textcolor{yellow!70!black}{\textbf{(2)}} A cross-lingual consensus is extracted to construct preference pairs: for consensual languages, a random incorrect option serves as the rejected response; for divergent languages, the model's actual output is rejected. \textcolor{green!70!black}{\textbf{(3)}} The model is fine-tuned using DPO with multilingual parallel batches.}
        \label{fig:c_3po}
    \end{figure*}
}

\newcommand{\figConsistencyVariationQwen}{
    \begin{figure}[t]
        \includegraphics[width=0.95\columnwidth]{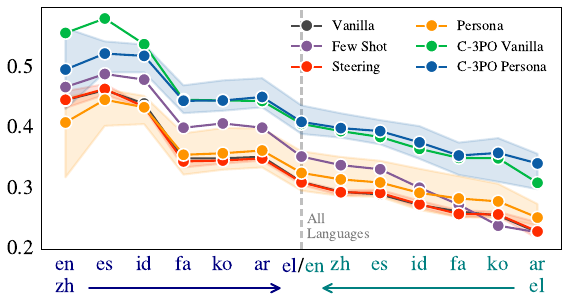}
        \caption{$\kappa_S$ variation (Qwen-2.5-3B) as languages are incrementally added by resource level: {\color{NavyBlue} higher$\rightarrow$lower}, {\color{teal} higher$\leftarrow$lower}. Bands indicate ranges across personas.}
        \label{fig:consistency_variation_qwen}
    \end{figure}
}

\newcommand{\figConsistencyVariationPlots}{
    \begin{figure*}[t]
        \centering
        \includegraphics[width=0.9\textwidth]{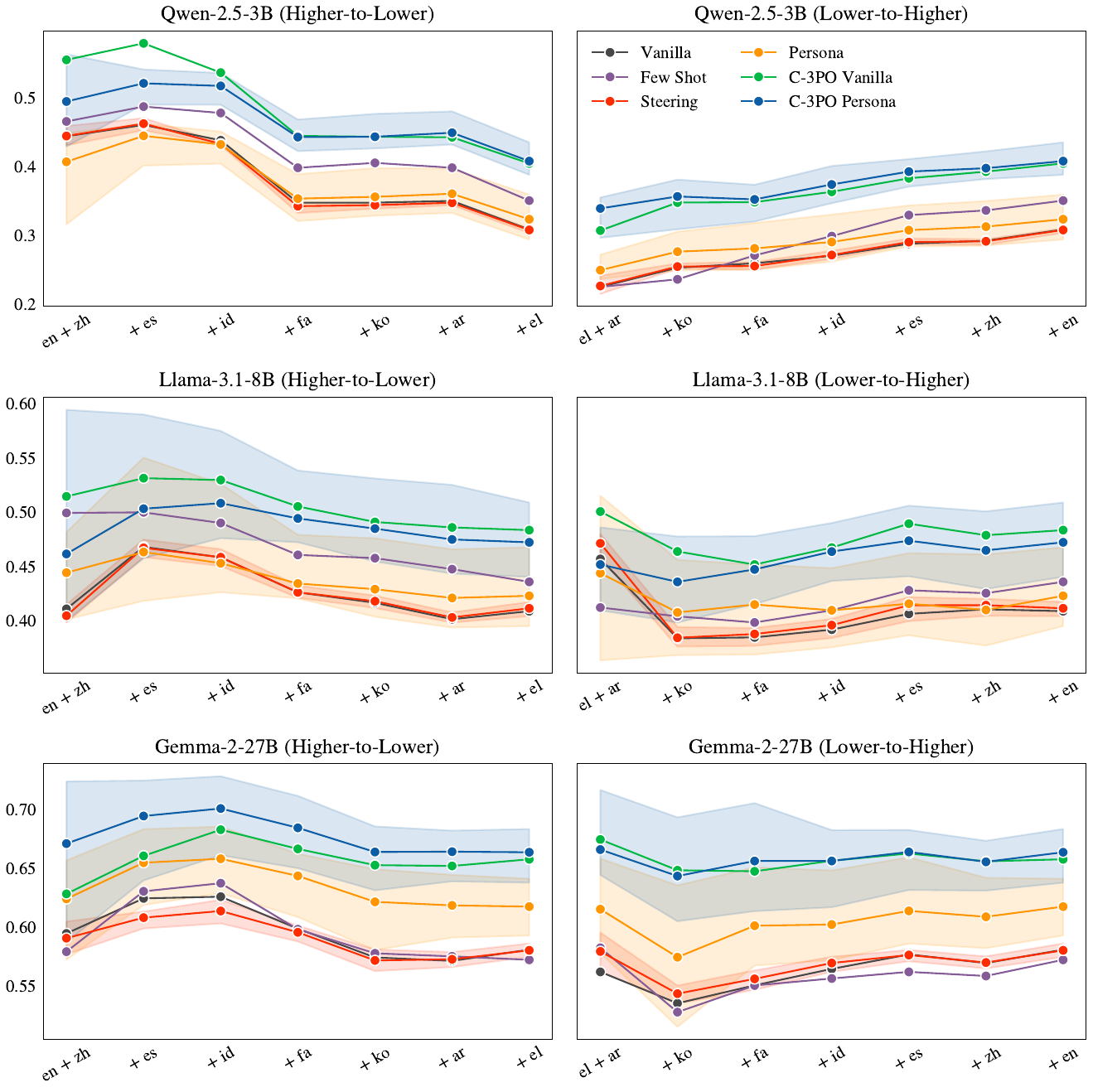}
        \caption{Singleton Fleiss's $\kappa_S$ consistency variation when adding languages in different orders for all models. Languages are ordered according to their resource level. Persona Vector Steering, Persona Prompting and C-3PO Persona show ranges across the eight personas.}
        \label{fig:consistency_variation_plots}
    \end{figure*}
}

\newcommand{\figLayerWiseKappa}{
    \begin{figure}[htb]
        \raggedleft
        \includegraphics[width=0.9\columnwidth]{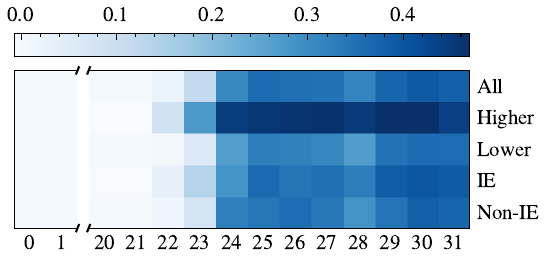}
        \caption{$\kappa_S$ consistency analysis across layers and language groups for Llama-3.1-8B. IE = Indo-European.}
        \label{fig:layer_wise_kappa}
    \end{figure}
}

\newcommand{\figLayerWiseCountryPredictions}{
    \begin{figure}[htb]
        \includegraphics[width=0.97\linewidth]{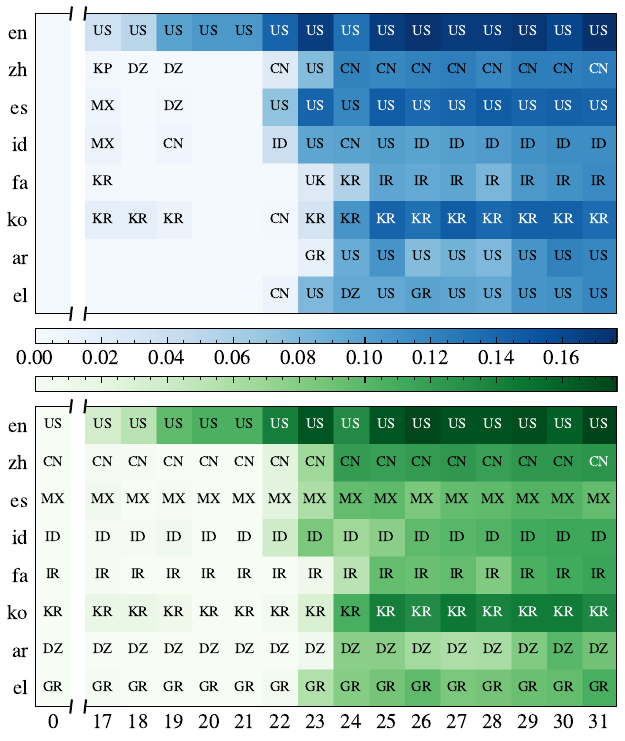}
        \caption{Cultural personalisation across layers for Llama-3.1-8B. Top: Frequency of the most predicted country. Bottom: Frequency of the language's stereotypical country (e.g., Indonesia for Indonesian).}
        \label{fig:layer_wise_country_predictions}
    \end{figure}
}

\newcommand{\figCombinedLayerSlopeAnalysis}{
    \begin{figure}[t]
        \includegraphics[width=\linewidth]{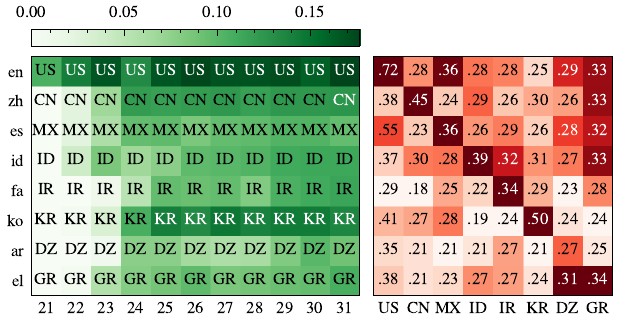}
        \caption{Cultural personalisation effect across layers for Llama-3.1-8B. {\color{ForestGreen} Left:} Frequency of the language's stereotypical country (e.g., Indonesia for Indonesian). {\color{purple} Right:} Slope (\%) of prediction frequency across layers per language-country pair. Red colour scale is normalised column-wise.}
        \label{fig:combined_layer_slope_analysis}
    \end{figure}
}

\newcommand{\figBLEnDMCQFormatNoJSON}{
    \begin{figure}[htb]
        \centering
        
        \begin{tcolorbox}
            What is a common snack for preschool kids in Assam? Without any explanation, choose only one from the given alphabet choices(e.g., A, B, C).
            
            \vspace{1em}

            A. finger-shaped biscuit\\
            B. jam sandwiches\\
            C. semolina\\
            D. toast
            
            \vspace{1em}
            
            Answer:
        \end{tcolorbox}
        \caption{Format of the multiple-choice questions (MCQ) in the BLEnD benchmark without the JSON answer format. This format is used for the layer-wise early-decoding interpretability analysis in \autoref{sec:interpretability_analysis} to allow for direct extraction of the model's choice at each layer without needing to parse a JSON structure.}
        \label{fig:blend_mcq_format_no_json}
    \end{figure}
}

\newcommand{\figTreatQuestionPrompt}{
    \begin{figure}[t]
        \centering
        \begin{tcolorbox}
            Remove the mention to the country or region in the following question. Keep the question as is, but without any mention to the country or region, ensuring that the question is grammatically correct. For example, "What is a common snack for preschool kids in Algeria?" should be modified to "What is a common snack for preschool kids?". For example, "Which city in Mexico hosts festivals the most?" should be modified to "Which city hosts festivals the most?". Do not answer the question, just modify it. The input question is in \{language\}. Return the modified question in \{language\}. Return only the modified question without any additional explanation.
        \end{tcolorbox}
        \caption{System prompt used to remove country mentions from questions to ensure cultural neutrality.}
        \label{fig:treat_question_prompt}
    \end{figure}
}

\newcommand{\figTranslateOptionsPrompt}{
    \begin{figure}[t]
        \centering
        \begin{tcolorbox}
            Translate the following answer options from English to \{language\}. As additional context, the question is: "\{question\}". Do not choose an answer, just translate the options. Keep the answer option labels in English (A, B, C...) but the answer options text should be *completely* in \{language\}. Do not add English words, just \{language\} words. Return only the translated options without any additional explanation.
        \end{tcolorbox}
        \caption{System prompt used to translate answer options to the target language.}
        \label{fig:translate_options_prompt}
    \end{figure}
}

\newcommand{\figPersonaPrompt}{
    \begin{figure}[t]
        \centering
        \begin{tcolorbox}
            You are a person from \{country\}. Answer the question based on the cultural norms and common practices of \{country\}.
        \end{tcolorbox}
        \caption{System prompt template used for the Persona Prompting baseline.}
        \label{fig:persona_prompt}
    \end{figure}
}

\input{tables}

\title{Mitigating Cross-Lingual Cultural Inconsistencies in LLMs via\\Consensus-Driven Preference Optimisation}

\author{%
    Lucas Resck\textnormal{\textsuperscript{1}} \and%
    Isabelle Augenstein\textnormal{\textsuperscript{2}} \and%
    Anna Korhonen\textnormal{\textsuperscript{1}} \\%
    \textsuperscript{1}Language Technology Lab, University of Cambridge \\%
    \textnormal{\textsuperscript{2}}University of Copenhagen \\%
    \small{%
        \texttt{\{ler44, alk23\}@cam.ac.uk}, \texttt{augenstein@di.ku.dk}%
    }%
}

\makeatletter
\def\paragraph{\@startsection{paragraph}{4}{\z@}%
  {0.75ex plus 0.25ex minus 0.1ex}{-1em}{\normalsize\bfseries}}
\makeatother

\begin{document}

    \maketitle
    \begin{abstract}
        Despite their impressive capabilities, multilingual large language models (MLLMs) frequently exhibit inconsistent behaviour when the prompt's language changes. While such adaptation is generally desirable, it becomes a critical failure when a user's identity is explicitly defined. For instance, given a fixed British persona and an ambiguous everyday knowledge query about literature, the prompt's language frequently overwrites the system persona -- yielding Shakespeare in English but Cervantes in Spanish. To robustly quantify this Cross-lingual Cultural Inconsistency, we introduce \textit{Singleton Fleiss's $\kappa_S$}, a metric mathematically resilient to hallucinations. For mitigation, we propose \textit{Cross-lingual Cultural Consistent Preference Optimisation} (C-3PO), a consensus-driven alignment framework.
C-3PO achieves up to a 0.13-point absolute increase in $\kappa_S$ over unaligned models, consistently outperforming strong prompting and representation steering baselines whilst preserving explicit user identities, cultural neutrality and intrinsic cultural knowledge.
Empirical evaluations demonstrate this inconsistency disproportionately affects lower-resource languages like Indonesian and Persian.
Finally, early decoding of intermediate layers reveals that MLLMs implicitly personalise outputs towards the prompt language's stereotypical culture as forward-pass representations stabilise.%
\footnote{Code and data will be released upon publication.}

    \end{abstract}

    \section{Introduction}

    Multilingual large language models (LLMs) have achieved state-of-the-art performance across diverse tasks, including cross-lingual transfer learning, machine translation and multilingual question answering \cite{wu_enhancing_2025, cui_multilingual_2025, jiang_few-shot_2025}.
    Recent studies, however, indicate that multilingual LLMs (MLLMs) frequently exhibit varied behaviour across languages, particularly in cultural domains \cite{lu_cultural_2025}, which can manifest as inconsistency.
    For instance, models can display lower self-consistency for some languages, output contradictory facts or prove brittle to the prompt's language \cite{fierro_factual_2022,qi_cross-lingual_2023, bulte_llms_2025}.

    \figCrossLingualCulturalInconsistency
    
    Response variation across languages is often viewed as a feature of MLLMs, allowing them to culturally adapt \cite{veselovsky_localized_2025}. However, this becomes a severe limitation when users explicitly provide contextual constraints. Recent work shows LLMs rely heavily on implicit demographic markers (like names) to stereotype users \cite{pawar_presumed_2025}; we identify a parallel vulnerability regarding language. Consider the scenario in \autoref{fig:cross_lingual_cultural_inconsistency}, where an MLLM is asked identical, inherently ambiguous cultural questions (everyday knowledge) in different languages. Even given a fixed user persona specifying their nationality, the MLLM gets confused by the input language, abandoning the explicit instruction and defaulting to the prompt's cultural stereotypes.
    Ideally, the LLM should adjust its behaviour to the user persona, though it instead undesirably adapts it to the input language \cite{bulte_llms_2025}.
    Therefore, cross-lingual inconsistency in this constrained context represents a clear failure of instruction following, stemming from the tight entanglement between language and culture \cite{yu_entangled_2026}.
    Consequently, users may receive conflicting information simply by querying in a different language, exacerbating perceptions of cultural bias \cite{zhou_does_2025}.
    
    While various methods exist to measure and mitigate general LLM inconsistency \cite{ifergan_beneath_2025, agarwal_aligning_2025, bu_alignx_2025}, the specific phenomenon of cross-lingual cultural inconsistency in MLLMs remains largely underexplored.
    Whereas we defer to future work identifying the specific cases in which cross-lingual consistency is desirable or not (\autoref{sec:related_work}), we focus on the specific undesirable behaviour where models fail to leverage the user's explicit identity (e.g., via system prompts), providing instead culturally personalised answers dictated solely by the prompt's language.

    In this work, we investigate and formalise the problem of Cross-lingual Cultural Inconsistency (CCI) in MLLMs.
    Specifically, we make the following contributions:
    \begin{itemize}[topsep=2pt, partopsep=0.5pt, itemsep=1pt, parsep=1pt]
        \item Drawing inspiration from traditional inter-annotator agreement metrics, we introduce \textit{Singleton Fleiss's $\kappa_S$}, a novel metric mathematically proven to evaluate cross-lingual agreement robustly, even in the presence of invalid or hallucinated responses.
        \item We propose \textit{Cross-lingual Cultural Consistent Preference Optimisation} (C-3PO), a self-supervised mitigation framework that leverages consensus among multilingual responses to align the model's representations and significantly improve cultural consistency across diverse baselines and model architectures, while preserving explicit user identities, cultural neutrality and intrinsic cultural knowledge.
        \item We empirically demonstrate that CCI is intrinsically linked to language resource levels, with lower-resource languages suffering from significantly more severe inconsistency.
        \item We conduct an interpretability analysis and provide layer-wise evidence that models implicitly personalise their answers towards the prompt language's stereotypical culture as their forward-pass representations stabilise.
    \end{itemize}

\section{Related Work}
    \label{sec:related_work}

    \paragraph{Culture-Language Entanglement and Bias.}
    
        Recent literature highlights the deep entanglement between language and culture in multilingual LLMs \cite{yu_entangled_2026, ying_disentangling_2025}. Studies demonstrate that both the prompt's language and explicit cultural framing significantly influence model outputs \cite{bulte_llms_2025, lu_cultural_2025, zhou_does_2025}, often exposing a systematic bias towards Western values \cite{bulte_llms_2025} and US-centric knowledge \cite{zhou_does_2025}. This entanglement ultimately manifests as cross-lingual inconsistency.

    \paragraph{Measuring and Mitigating Inconsistency.}

        Methodologies for defining and measuring multilingual inconsistency vary widely.
        \citet{veselovsky_localized_2025} assess performance disparities across languages by contrasting explicit contextual cues with implicit language signals, while \citet{fierro_factual_2022} investigate intra-language factual inconsistency, revealing lower self-consistency in non-English languages.
        Closer to our methodology, some works define consistency strictly as the model's ability to provide identical answers to identical cross-lingual queries \cite{qi_cross-lingual_2023, ifergan_beneath_2025}.
        To improve cross-lingual alignment, prior work has proposed various training interventions, such as constructing multilingual parallel batches \cite{agarwal_aligning_2025}, applying contrastive learning to align internal representations \cite{bu_alignx_2025} and bypassing layers via shortcuts to enhance factual consistency \cite{wang_lost_2025}.

    \paragraph{Desirability of Cross-Lingual Consistency.}

        The desirability of cross-lingual consistency remains heavily debated and use-case dependent. For cultural localisation, a strong dependence on language cues is often advantageous, serving as a proxy for target cultural contexts \cite{veselovsky_localized_2025}. Conversely, the veracity of factual knowledge is inherently language-agnostic \cite{ifergan_beneath_2025, wang_lost_2025} -- a principle that extends to cultural facts \cite{zhou_does_2025}. Furthermore, \citet{bulte_llms_2025} demonstrate that language is an unreliable driver of cultural alignment; they argue that reducing unpredictable sensitivity to prompt language in favour of output consistency is generally preferable, calling for strategies to mitigate language-induced variability.

    \paragraph{Our Positioning.}

        In contrast to prior work evaluating factual accuracy or performance gaps, we explicitly isolate \textit{Cross-lingual Cultural Inconsistency} -- the phenomenon where a model generates semantically divergent responses to a cultural query based solely on the prompt's language. While we leave the broader debate on general cross-lingual consistency to future work, we argue that, when a user's persona is explicitly defined (\autoref{fig:cross_lingual_cultural_inconsistency}), this instruction must supersede implicit language cues. In such constrained settings, cross-lingual divergence represents a clear failure of instruction following. We hypothesise that this failure is driven by implicit cultural personalisation \cite{neplenbroek_reading_2025}, which we substantiate through a layer-wise interpretability analysis. By decoupling consistency from ground-truth accuracy, our framework captures behavioural discrepancies hidden by uniform performance metrics, which we subsequently resolve via a novel consensus-driven preference optimisation strategy.

\section{Cross-lingual Cultural Inconsistency}
    \label{sec:cross_lingual_cultural_inconsistency}

    We now formalise the phenomenon of cross-lingual cultural inconsistency and distinguish it from standard multilingual performance metrics.

    \subsection{Definition and Formalisation}
        \label{sec:definition_and_formalisation}

        Consider a scenario where a user with a fixed identity (persona) interacts with LLMs across multiple languages. Ideally, an aligned model should maintain semantic consistency regarding the user's persona, regardless of the input language. However, as illustrated in \autoref{fig:cross_lingual_cultural_inconsistency}, LLMs frequently exhibit divergent behaviours based on the prompt language, especially for topics like everyday knowledge. We define \textit{Cross-lingual Cultural Inconsistency} (CCI) as the divergence of model outputs across different languages given a fixed user persona.

        Formally, we assume the model $M$ receives a user persona $u$ (e.g., via a system prompt, user profile or interaction history) and an input query $x$ formulated in a language $l$. Let $y = M(u, x, l)$ denote the generated response. The model is considered \textit{inconsistent} if there exist two languages $l_1, l_2$ such that, for the same user persona $u$ and query content $x$,
        \begin{equation*}
            M(u, x, l_1) \neq M(u, x, l_2).
        \end{equation*}
        This inequality implies semantic disagreement rather than lexical difference. It suggests that the prompt's language $l$ acts as a confounding variable, implicitly overriding the explicit persona $u$ with cultural priors associated with $l$ (e.g., a Spanish prompt triggering Spanish cultural associations in \autoref{fig:cross_lingual_cultural_inconsistency}). We frame this as a failure of instruction following, where implicit language-driven personalisation outweighs explicit user-driven constraints.

    \subsection{Consistency vs. Performance}
        \label{sec:consistency_vs_performance}
    
        It is crucial to distinguish cross-lingual \textit{consistency} from cross-lingual \textit{performance} (or accuracy). High performance in multiple languages does not guarantee consistency between them. For instance, a model could achieve identical accuracy scores in two languages (e.g., 60\%) yet fail on completely disjoint subsets of questions; in such a case, performance is stable, but cross-lingual consistency is low. Conversely, if a model hallucinates the exact same incorrect answer in both languages, performance is zero, but consistency is maximised. In the domain of cultural knowledge, particularly everyday knowledge, where ground truth is often subjective or undefined, we cannot rely solely on accuracy. Here, consistency serves as a vital proxy for model robustness; an aligned model should not alter its stance on subjective cultural topics merely because the conversation language has changed, assuming the user persona remains constant.

\section{Methodology}

    In this section, we outline our methodological approach to investigating CCI in multilingual LLMs. We first detail the construction of a parallel evaluation dataset (\autoref{sec:dataset_construction}). Subsequently, we introduce our metrics for quantifying inconsistency and propose our mitigation strategy framework (Sections \ref{sec:inconsistency_measurement} and \ref{sec:c_3po}).
    Finally, we explore baseline mitigation strategies in \autoref{sec:mitigation_strategies}.

    \subsection{Dataset Construction}
        \label{sec:dataset_construction}

        \figDatasetConstruction

        To systematically evaluate CCI, we construct a multilingual parallel dataset of everyday knowledge queries derived from the BLEnD benchmark \cite{myung_blend_2024} (\autoref{fig:dataset_construction}). We select its multiple-choice question (MCQ) subset (\autoref{fig:blend_mcq_format}), as this discrete format facilitates evaluation via standard agreement metrics like Fleiss's $\kappa$. Because the original MCQs are English-only, we extract parallel translated questions from BLEnD's short-answer (SAQ) subset across eight diverse languages: English, Spanish, Chinese, Arabic, Indonesian, Korean, Greek and Persian.
        Crucially, to strictly isolate implicit \textit{language-driven} personalisation, we employ GPT-5.2 to neutralise the prompts by stripping all explicit country references. We then filter the dataset to retain only queries featuring at least one valid answer mapping to the eight natively associated countries (the United States, Mexico, China, Algeria, Indonesia, South Korea, Greece and Iran), leveraging BLEnD's country-level annotations. Finally, GPT-5.2 translates all answer options into the eight target languages, and the dataset is partitioned into a 70\%-10\%-20\% train/validation/test split. Comprehensive construction details, including prompts and strict data-leakage prevention strategies, are provided in \autoref{appendix:dataset_construction_details}.

        To validate dataset integrity, we implemented an Automated Quality Assurance Pipeline with Author Verification. This revealed failed country neutralisation in $<2\%$ of samples and minor option translation discrepancies in $\sim7\%$. Crucially, even flagged translations retained high semantic fidelity (averaging 0.8/1.0), confirming minimal translation-induced drift (\autoref{appendix:dataset_quality_assurance}).

    \subsection{Inconsistency Measurement}
        \label{sec:inconsistency_measurement}

        To quantify cross-lingual inconsistency independently of ground-truth accuracy, we build upon Fleiss's $\kappa$ \cite{fleiss_measuring_1971} to propose our novel \textit{Singleton Fleiss's $\kappa_S$} metric.
        The original $\kappa$ is a standard statistical measure of inter-rater agreement for categorical data increasingly adopted for multilingual evaluation \cite{zaghouani_emohopespeech_2025, riabi_beyond_2025}. Unlike exact-match metrics -- which misleadingly report 100\% consistency if a biased model pathologically predicts a single option -- it robustly accounts for chance agreement using marginal distributions.
        
        While standard $\kappa$ captures valid cross-lingual agreement, it assumes all responses map cleanly to a predefined valid set ($\mathcal{V}$). In practice, LLMs frequently hallucinate or generate invalid formats. Discarding these errors induces survivorship bias, artificially inflating consistency scores. To address this, we introduce \textit{Singleton Fleiss's $\kappa_S$}.

        \begin{definition}[Singleton Fleiss's $\kappa_S$]
            \label{def:singleton_fleiss_kappa}
            We extend the valid answer set $\mathcal{V}$ with a set of dynamically generated singleton (invalid) answers $\mathcal{U}$, yielding $\mathcal{V}' = \mathcal{V} \cup \mathcal{U}$. Every invalid response maps to a strictly unique element in $\mathcal{U}$. For $N$ samples and $n$ languages, let $n_{ij}$ denote the number of languages assigning category $j \in \mathcal{V}'$ to sample $i$.
            We then compute observed agreement $P_o$, expected agreement $P_e$ and $\kappa_S$ as follows:
            \begin{align*}
                P_o = \frac{1}{Nn(n-1)} \sum_{i=1}^N \sum_{j \in \mathcal{V}'} n_{ij}(n_{ij} - 1), \\
                P_e = \sum_{j \in \mathcal{V}'} \left( \frac{1}{Nn} \sum_{i=1}^N n_{ij} \right)^2, \ \
                \kappa_S = \frac{P_o - P_e}{1 - P_e}.
            \end{align*}
        \end{definition}
        
        By treating errors as unique singletons, $\kappa_S$ mathematically penalises inconsistency without requiring ad-hoc sample exclusion. Crucially, we prove that $\kappa_S$ asymptotically converges to the standard valid-category $\kappa$ as dataset size increases (\autoref{appendix:proof_singleton_kappa}).
        Alongside $\kappa_S$, we track three complementary metrics: \textit{Soft Consistency} (average pairwise agreement), \textit{Hard Consistency} (proportion of unanimous agreement) and \textit{Mode Frequency} (dominant answer selection rate); mathematical definitions in \autoref{appendix:metric_definitions}.

    \subsection{Cross-lingual Cultural Consistent Preference Optimisation (C-3PO)}
        \label{sec:c_3po}
        
        \figCThreePO

        To systematically mitigate cross-lingual inconsistency, we propose \textit{Cross-lingual Cultural Consistent Preference Optimisation} (C-3PO), a novel self-supervised framework. C-3PO leverages the parallel structure of our dataset and the model's own multilingual generations to establish a cross-lingual consensus, which is subsequently used to align responses across languages.

        As illustrated in \autoref{fig:c_3po}, the pipeline operates in three phases. First, given a culturally neutral query (no country-specific references), we prompt the base LLM to generate responses across all $N=8$ languages in the training set. Second, we extract a cross-lingual consensus, defined as the answer selected by a strict majority of languages.
        For samples exhibiting a valid consensus, we construct language-specific preference pairs $(y_w, y_l)$. The consensus answer is universally assigned as the chosen response $y_w$. The rejected response $y_l$ is defined conditionally: for languages where the model initially disagreed with the consensus, $y_l$ is the actual divergent output; for languages that already agreed, $y_l$ is uniformly sampled from the remaining non-consensus options.

        To prevent the optimisation process from disproportionately biasing the model towards languages that frequently dictate the consensus (for instance, higher-resource languages), we apply an undersampling strategy to balance the representation of consensual and divergent languages.
        This process ensures that both English and Persian, for example, contribute to the consensus answer approximately the same number of times.
        During the final fine-tuning phase, data is structured into parallel batches -- each containing the exact same query translated across all $N$ languages. We then optimise the model using Direct Preference Optimisation (DPO)~\cite{rafailov_direct_2023}. This parallel batching ensures that gradient updates simultaneously pull disparate language representations towards a unified semantic anchor.

        By mining preferences directly from the model's internal consensus in a language-balanced manner, C-3PO circumvents the need for costly human annotations or culturally subjective ground truths. Furthermore, integrating DPO with Low-Rank Adaptation (LoRA)~\cite{hu_lora_2022} provides a highly efficient and scalable solution that avoids the brittleness of heuristic baselines such as ad-hoc persona and few-shot prompting or vector steering.

    \subsection{Baseline Mitigation Strategies}
        \label{sec:mitigation_strategies}
        
        To benchmark C-3PO's efficacy, we implement three established behavioural steering baselines.

        \paragraph{Persona Prompting.}

            We explicitly condition the model to adopt a specific nationality (e.g., \textit{``You are a person from Mexico (...)''}; see \autoref{fig:persona_prompt}), mirroring the formalisation in \autoref{sec:definition_and_formalisation}. This aligns with established findings that LLMs adapt their outputs given contextual background information \cite{veselovsky_localized_2025, ying_disentangling_2025}.

        \paragraph{Few-shot Prompting.}

            As an in-context learning analogue to C-3PO \cite{mosbach_few-shot_2023}, we prepend demonstrations of identical cross-lingual queries that consistently yield the same culturally appropriate answer. To prevent degenerate mode-seeking behaviour (i.e., the model blindly repeating a single answer), we ensure the few-shot examples encompass diverse target labels.

        \paragraph{Persona Vector Steering.}
        
            Following recent representation engineering methods \cite{veselovsky_localized_2025, ghandeharioun_whos_2024}, we steer the model's latent space using persona-specific intervention vectors. These vectors are computed as the mean difference in residual stream activations (at the final token position) between prompt pairs with and without the persona instructions.
            This way, we can ``simulate'' the persona prompt but control the steering intensity more precisely.
            We extract language-specific vectors and sweep over layer combinations to identify the optimal configuration.

\section{Experiments}

    This section outlines our framework to evaluate the efficacy of C-3PO against established baselines. We first detail our experimental setup. Next, we demonstrate C-3PO's robust superiority across diverse models and language groups. Finally, we provide empirical evidence that lower-resource languages suffer from significantly degraded cross-lingual consistency.

    \subsection{Experimental Setup}
        \label{sec:experimental_setup}

        \paragraph{Models.}

            We evaluate three open-weight multilingual LLMs spanning diverse architectures and parameter scales: \texttt{Gemma-2-27b-it}~\cite{team_gemma_2024}, \texttt{Llama-3.1-8B-Instruct}~\cite{grattafiori_llama_2024} and \texttt{Qwen2.5-3B-Instruct}~\cite{qwen_qwen25_2025}. Their open-access nature is strictly required to facilitate both our latent interpretability analyses and the implementation of mitigation strategies.

        \paragraph{Data.}
        
            We conduct experiments on the parallel dataset described in \autoref{sec:dataset_construction}, using the validation set for hyperparameter tuning and the test set for final evaluation.
            Models are prompted to produce outputs in BLEnD's default JSON format (\autoref{fig:blend_mcq_format}).

        \paragraph{Language Groups.}

            \tabLanguageGroups

            To provide an interpretable analysis and avoid enumerating the power set of all combinations across our eight selected languages, we aggregate them by resource level and linguistic family (\autoref{tab:language_groups}).
            We split languages into Higher- and Lower-Resource groups based on their distribution in Common Crawl \cite{resck_explainability_2025} with a 1\% threshold\footnote{We utilise the \texttt{CC-MAIN-2026-08} crawl. Indonesian (1.01\%) is assigned to the lower-resource group as its frequency is closer to Persian's (0.88\%) than Spanish's (4.47\%).}.

        \paragraph{Metrics.}

            We employ Singleton Fleiss's $\kappa_S$ as our primary consistency metric, supplemented by bootstrap variance ($\sigma^2_{\kappa_S}$, 1,000 samples).
            We also report Soft and Hard Consistency scores, Mode Frequency and Model Error for a comprehensive evaluation.
            Metrics are computed for each of the language groups defined in \autoref{tab:language_groups}.

        \paragraph{Hyperparameters.}
        
            Comprehensive hyperparameter settings for all methods are optimised on the validation set and detailed in \autoref{appendix:hyperparameters}.
    
    \subsection{Results on Consistency}
        \label{sec:results_on_consistency}

        \tabMainResults

        \autoref{tab:main_results} reports the Singleton Fleiss's $\kappa_S$ consistency scores across all models and mitigation strategies on the test set.
        We report the performance of the unmitigated (Vanilla) models alongside our baselines and C-3PO.
        To ensure our evaluation strictly aligns with the formalisation of CCI (which assumes a fixed user persona), we evaluate our framework both in its unprompted state (C-3PO Vanilla) and combined with Persona Prompting (C-3PO Persona).
        For all persona-based methods, we detail the minimum, average and maximum across the eight personas.
        
        Unmitigated models naturally exhibit higher consistency on higher-resource languages, and overall consistency scales predictably with model capacity. Crucially, \textbf{C-3PO consistently outperforms all baseline mitigation strategies across all models and language groups}. While the maximum configuration of Persona Vector Steering occasionally surpasses C-3PO Vanilla on Higher-Resource or Indo-European subsets, this represents a cherry-picked oracle scenario. By contrast, both the average and maximum configurations of C-3PO Persona comfortably overcome these baseline peaks. This demonstrates that C-3PO achieves superior, robust cross-lingual consistency even under ad-hoc manual selection of baseline comparators.
        
        Comparing baselines, Few-shot Prompting proves more effective for smaller models, whereas Persona Prompting excels in larger models, likely due to their advanced instruction-following capabilities. These primary findings hold across expanded pair-wise language evaluations and alternative consistency metrics (Tables \ref{tab:expanded_results}, \ref{tab:expanded_soft_consistency}, \ref{tab:expanded_hard_consistency} in \autoref{appendix:additional_experimental_results}). Notably, while Few-shot artificially inflates Soft and Hard Consistency by collapsing to a single mode (i.e., indiscriminately repeating an answer, as evidenced by \autoref{tab:expanded_mode_frequency}), $\kappa_S$ robustly penalises this degenerate behaviour.

    \subsection{Consistency vs. Language Resource Level}
        \label{sec:consistency_vs_resource_level}

        \figConsistencyVariationQwen

        \autoref{tab:main_results}
        highlights that the Lower-Resource group suffers from markedly lower $\kappa_S$ scores across all models and nearly all mitigation strategies.
        To empirically isolate this effect, we measure consistency variation as languages are incrementally added to the evaluation pool based on their resource availability, ranked by their Common Crawl distribution~\cite{resck_explainability_2025,lai_okapi_2023}. We simulate two scenarios: incrementally adding languages in decreasing order of resourcefulness ({\color{NavyBlue} Higher-to-Lower}), and in increasing order ({\color{teal} Lower-to-Higher}).
        
        As illustrated for Qwen-2.5-3B in \autoref{fig:consistency_variation_qwen}, incorporating languages in decreasing order of resourcefulness strictly degrades consistency across all methods, whereas the reverse order consistently improves it. Because the sets of evaluated languages are identical at the middle, these opposing trajectories directly isolate \textbf{resource scarcity as the primary driver of consistency degradation}.
        
        This phenomenon generalises across all evaluated models (\autoref{fig:consistency_variation_plots}). Notably, C-3PO maintains a substantially higher consistency than all other methods throughout the addition of languages.

\section{Cultural and Persona Alignment}

    \tabCThreePOImplications

    In this section, we investigate the potential implications of C-3PO fine-tuning. While C-3PO drives substantial improvements in cross-lingual consistency, we must ensure this does not come at the cost of other critical alignment dimensions. Specifically, we examine its impact across three axes: (1) systemic cultural bias, (2) explicit persona adherence and (3) intrinsic cultural knowledge retention (cultural erasure).

    \paragraph{1) Cultural Bias.}
    
        To investigate whether our consensus-driven method inadvertently privileges specific (e.g., high-resource) cultures, we map the models' answers back to the original BLEnD answer-to-country annotations. 
        \autoref{tab:c3po_implications} (i) reports the country selection rate for each model and non-persona method in the test set, averaged across countries (detailed breakdown in \autoref{tab:cultural_bias_expanded}). All methods, including C-3PO Vanilla, exhibit highly similar country selection rates. The absence of significant shifts confirms that C-3PO fine-tuning does not induce systemic cultural bias.

    \paragraph{2) Persona Adherence.}

        To verify that the model correctly applies explicitly assigned user identities (e.g., a Mexican persona yielding answers aligned with Mexico), we evaluate persona-country match accuracy. \autoref{tab:c3po_implications} (ii) presents this accuracy for each persona-based method. 
        Crucially, most models maintain comparable accuracy between the baseline Persona method and C-3PO Persona, fluctuating by merely $(-2.5\%, +0.8\%)$. This confirms that C-3PO's consensus mechanism does not disregard explicit user personas. The sole exception is Llama, which suffers an architecture-specific degradation under our fine-tuning approach.

    \paragraph{3) Cultural Knowledge.}

        Finally, we test the hypothesis of \emph{cultural erasure} by evaluating model performance on the original BLEnD benchmark, which probes intrinsic cultural knowledge.
        To prevent data contamination, we restrict evaluation to BLEnD sample IDs present in our test set, subsampling a maximum of eight questions per ID and country to ensure computational tractability ($\sim$6,600 samples). \autoref{tab:c3po_implications} (iii) and (iv) present BLEnD accuracy on the eight countries included in our training data versus the remaining countries present only in BLEnD.
        Performance remains largely stable from Vanilla to C-3PO Vanilla, fluctuating between $(-2.4\%, +0.3\%)$. While Llama again exhibits an architecture-specific drop, Qwen demonstrates notable improvements on the unseen set, highlighting C-3PO's potential to aid out-of-distribution generalisation.

    \paragraph{Takeaways.}

        Cultural bias, persona adherence and intrinsic cultural knowledge remain highly stable following C-3PO fine-tuning (excluding Llama's architecture-specific sensitivities). Crucially, \textbf{C-3PO achieves state-of-the-art cross-lingual consistency whilst successfully preserving explicit user identities and cultural neutrality.}

\section{Layer-wise Interpretability Analysis}
    \label{sec:interpretability_analysis}

    \figCombinedLayerSlopeAnalysis

    We hypothesise that Cross-lingual Cultural Inconsistency stems from implicit cultural personalisation, where the model tailors its response to the stereotypical culture associated with the input language (e.g., a Spanish prompt eliciting Mexican cultural answers).
    To understand the mechanisms driving this effect, we conduct a layer-wise interpretability analysis on Llama-3.1-8B using our test set.

    We first decode layer-wise predictions by eliciting direct multiple-choice outputs instead of a structured format (\autoref{fig:blend_mcq_format_no_json}).
    Intermediate representations are routed directly through the final layer normalisation and language modelling head to obtain a probability distribution over the vocabulary.
    Finally, we map the model's intermediate answer predictions back to their corresponding countries using BLEnD's annotations, allowing us to track the evolution of cultural personalisation at every layer.

    \autoref{fig:combined_layer_slope_analysis} (left) visualises the frequency of predictions for the language-specific stereotypical country across layers.
    Stereotypical predictions surge in later layers, indicating an escalating cultural personalisation effect.
    This shift coincides precisely with the consistency jump around layers 22--25 (\autoref{fig:layer_wise_kappa}), suggesting that the model commits to a culturally personalised answer exactly as its representation stabilises.
    We also notice a higher consistency for the Higher-Resource group that emerges earlier in the network compared to other groups, and a pronounced bias towards the US (\autoref{fig:layer_wise_country_predictions}), corroborating prior findings on Western-centric biases in LLMs \cite{bulte_llms_2025, zhou_does_2025}.

    A potential confounding factor is that the model might simply increase predictions for a specific country across all input languages in later layers.
    To confirm that the personalisation effect is stronger for the stereotypical language, we perform a slope analysis by fitting a linear regression to the prediction frequency of each country across layers, conditioned on the input language.
    \autoref{fig:combined_layer_slope_analysis} (right) presents these slopes (\%) for each language-country pair. With rare exceptions, the slope for any given country is strictly highest when queried in its stereotypical language.
    These findings strongly support the hypothesis that \textbf{MLLMs implicitly personalise their answers towards the prompt language's stereotypical culture}, suggesting that this personalisation effect is a key driver of cross-lingual inconsistency.

\section{Conclusion}

    We systematically investigated Cross-lingual Cultural Inconsistency (CCI) in MLLMs,
    demonstrating its entanglement with language-driven personalisation and disproportionate degradation of lower-resource languages.
    To robustly quantify this, we introduced \textit{Singleton Fleiss's $\kappa_S$}, a metric resilient to hallucinations. For mitigation, we proposed \textit{Cross-lingual Cultural Consistent Preference Optimisation} (C-3PO), a self-supervised, consensus-driven alignment framework that significantly outperforms baselines
    while preserving cultural neutrality, explicit user identities and intrinsic cultural knowledge. Finally, our layer-wise interpretability analysis revealed that, as forward-pass representations stabilise, MLLMs implicitly anchor outputs to the prompt language's stereotypical culture.

\newpage

\section*{Limitations}

    Our methodology relies on three open-weight models of up to 27B parameters, selected for their multilingual capabilities, their amenability to latent analysis and fine-tuning and computational feasibility. Consequently, our findings may not fully represent the behaviour of substantially larger, potentially proprietary, models. 
    Furthermore, our evaluation is restricted to a dataset constructed from the BLEnD benchmark. While our automated assurance with manual verification indicates high quality, the treatment of questions and translation of answer options using GPT-5.2 may introduce subtle biases. Future work should focus on validating these rigorously through manual review, and explore generalisability across diverse datasets, including those beyond everyday knowledge.

    To automate the data processing pipeline, we used GPT-5.2, the state-of-the-art model at the time.
    We acknowledge that this hampers the complete reproducibility of the data processing, as it relies on a specific closed-source model.
    To mitigate this issue and diminish the need for complete reconstruction, we release all prompts and data in the supplementary material.

    The results on cultural and persona alignment (\autoref{tab:c3po_implications}) show that C-3PO, while significantly improving cross-lingual consistency, maintains stable cultural bias, persona adherence and cultural knowledge.
    However, as noted in the text, the specific architecture of Llama suffers from a significant drop in persona adherence and BLEnD performance.
    We attribute this occurrence to an architecture-specific sensitivity to our fine-tuning objective and plan to investigate this further in future work.
    Additionally, overall BLEnD performance fluctuates slightly from Vanilla to C-3PO, with some models showing a small drop and others a small improvement. This suggests that the trade-offs between consistency mitigation and cultural erasure may be model-specific. This warrants further investigation, alongside a more rigorous evaluation of what constitutes a significant or acceptable accuracy drop in exchange for alignment.

    In our work, we restrict the scope of our analysis to eight languages from BLEnD's benchmark and their associated countries as proxies for culture.
    They span a diverse set of resource levels, scripts and language families, but this choice is ultimately limited by the strict methodological requirement of fitting exactly eight language variations of a query into a single C-3PO preference batch (a power of 2, optimal for GPU memory constraints).
    On the one hand, this is an intrinsic limitation of our work, as it restricts the generalisability of our findings to other languages and cultures not included in our analysis.
    On the other hand, \autoref{tab:c3po_implications} (iv) shows that C-3PO may lead to improved out-of-distribution performance in some unseen languages, suggesting that the benefits of consistency mitigation may extend beyond the specific languages explicitly included in the fine-tuning process. We leave further investigation of this scaling to future work.

    Our choice of multiple-choice questions rather than open-ended generation was by design. Our proposed metric, Singleton Fleiss's $\kappa_S$, mathematically depends on the categorical nature of MCQ, a criterion not fulfilled by BLEnD's short-answer questions. We acknowledge that this restricts naturalistic settings and leave the development of robust consistency metrics for open-ended generation to future work.

    Theoretically, our framework equates countries with cultures -- a common yet reductive proxy \cite{veselovsky_localized_2025, pawar_presumed_2025}. Future studies must adopt more multidimensional representations of culture \cite{liu_culturally_2025, pawar_survey_2025}. Additionally, while our slope analysis (\autoref{fig:combined_layer_slope_analysis}) successfully isolates language-specific personalisation, it simultaneously reveals a competing, pervasive US-centric bias across several languages. The interplay between this systemic Western bias and language-driven personalisation warrants further investigation, which we leave to future work.

    Finally, our work studies cross-lingual consistency under the strict assumption of explicitly provided user personas. Future research must formally delineate the specific contexts where semantic consistency is paramount from those where culturally adaptive variability actively enhances user experience. We also advocate for deeper mechanistic interventions, such as causal activation patching \cite{yu_entangled_2026}, to definitively isolate the internal drivers of cross-lingual cultural inconsistency and better understand the dynamics of consensus-driven alignment.

\section*{Acknowledgements}

    We thank Ivan Vulić for his early guidance. We are grateful to Tiancheng Hu, Han Zhou and Yinhong Liu for their brainstorming and feedback, which helped shape the early conceptualisation of this work. We also thank Cecilia Liu for her feedback during the project's early stages, alongside our lab colleagues for informal discussions.

    This work was supported by the UK Research and Innovation (UKRI) Frontier Research Grant EP/Y031350/1 EQUATE.

    \noindent $\begin{array}{l}\includegraphics[width=1cm]{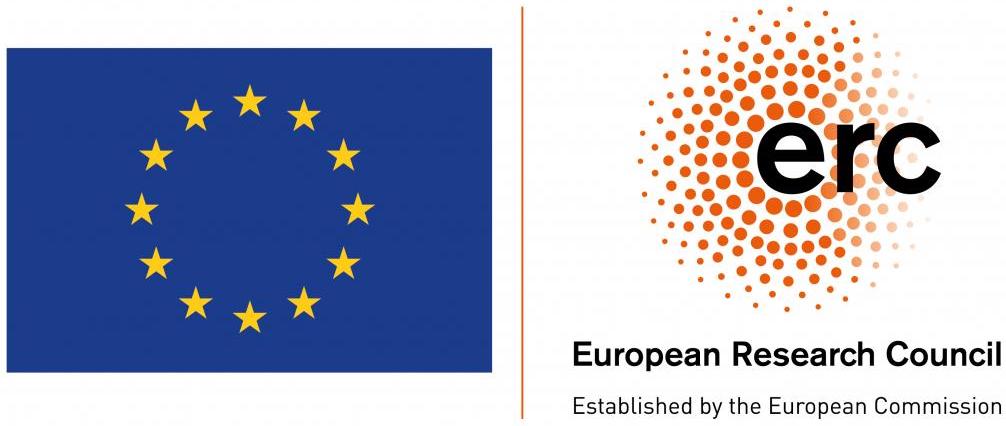} \end{array}$ 
    This research was co-funded by the European Union (ERC, ExplainYourself, 101077481), and supported by the Pioneer Centre for AI, DNRF grant number P1. Views and opinions expressed are however those of the author(s) only and do not necessarily reflect those of the European Union or the European Research Council. Neither the European Union nor the granting authority can be held responsible for them.

    Lucas Resck gratefully acknowledges funding from the Cambridge Commonwealth, European and International Trust through a PhD scholarship. Lucas Resck acknowledges travel support from ELIAS (GA no 101120237). Furthermore, Lucas Resck acknowledges funding from the Danish Data Science Academy (DDSA).

    AI tools were employed to assist with specific tasks, including coding, text refinement and information summarisation, enhancing overall workflow efficiency. The authors meticulously reviewed all AI-assisted outputs and bear full responsibility for the final content of this manuscript.
    
    Emoji graphics used in figures are provided by \href{https://github.com/googlefonts/noto-emoji}{Google Noto Emoji}, licensed under the \href{https://www.apache.org/licenses/LICENSE-2.0}{Apache License 2.0}.

    \appendix

    \section{Consistency Metric Definitions}
    \label{appendix:metric_definitions}

    \begin{definition}[Original Fleiss's $\kappa$ for Cross-lingual Consistency]
        \label{def:kappa_for_cross_lingual_consistency}
        Let $N$ be the total number of samples, $n$ be the number of evaluation languages, and $k$ be the number of valid answer categories per sample. We formulate the observed agreement ($P_o$) and expected chance agreement ($P_e$) as follows:
        \begin{align*}
            P_o &= \frac{1}{Nn(n-1)} \sum_{i=1}^N \sum_{j=1}^k n_{ij}(n_{ij} - 1) \\
            P_e &= \sum_{j=1}^k \left( \frac{1}{Nn} \sum_{i=1}^N n_{ij} \right)^2
        \end{align*}
        where $n_{ij}$ denotes the number of languages that assigned the $j$-th category to the $i$-th sample. The consistency metric $\kappa$ is defined as:
        \begin{equation*}
            \kappa = \frac{P_o - P_e}{1 - P_e}.
        \end{equation*}
    \end{definition}

    \begin{definition}[Soft Consistency, Hard Consistency and Mode Frequency]
        Following the notation in \autoref{def:kappa_for_cross_lingual_consistency}, letting $a_{il}$ denote the answer generated for the $i$-th sample in the $l$-th language, and $n_{ij}$ be the count of languages that selected the $j$-th category for the $i$-th sample, we define:
        \begin{align*}
            \text{Soft} &= \frac{2}{Nn(n-1)} \sum_{i=1}^N \sum_{1 \leq l < m \leq n} \mathbb{I}[a_{il} = a_{im}], \\
            \text{Hard} &= \frac{1}{N} \sum_{i=1}^N \mathbb{I}\left[\bigwedge_{l=1}^{n-1} a_{il} = a_{i(l+1)}\right], \\
            \text{Mode} &= \frac{1}{N} \sum_{i=1}^N \frac{\max_{j} n_{ij}}{n}.
        \end{align*}        
    \end{definition}

\section{Proof of Convergence of $\kappa_S$}
    \label{appendix:proof_singleton_kappa}

    In this section, we demonstrate that Singleton Fleiss's $\kappa$ ($\kappa_S$) (\autoref{def:singleton_fleiss_kappa}) converges to the standard $\kappa$ (determined exclusively by valid categories; see \autoref{def:kappa_for_cross_lingual_consistency}) as the sample size $N$ approaches infinity.

    \begin{theorem}[Convergence of $\kappa_S$]
        \label{thm:convergence}
        Let $\kappa_S$ and $\kappa$ be defined as in Definitions \ref{def:singleton_fleiss_kappa} and \ref{def:kappa_for_cross_lingual_consistency}. As the dataset size grows, $\kappa_S$ asymptotically converges to the valid-category $\kappa$:
        \begin{equation*}
            \lim_{N \to \infty} \kappa_S = \kappa.
        \end{equation*}
    \end{theorem}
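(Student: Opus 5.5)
The plan is to compare the two statistics term by term, using the fact that the singleton categories in $\mathcal{U}$ each receive at most one vote. I read the valid-category $\kappa$ of \autoref{def:kappa_for_cross_lingual_consistency} as computed on the same $N$ samples with the same normalisations $Nn(n-1)$ and $Nn$, summing only over $j \in \mathcal{V}$. Invalid responses are then simply uncounted there, rather than discarded along with their samples.

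Since both quantities depend on $N$, the statement $\lim_{N\to\infty}\kappa_S=\kappa$ will be proved in the form $\kappa_S-\kappa\to 0$.

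\textbf{Step 1 (observed agreement).} Split $\sum_{j\in\mathcal{V}'}=\sum_{j\in\mathcal{V}}+\sum_{j\in\mathcal{U}}$. By construction every $u\in\mathcal{U}$ is assigned to exactly one (sample, language) pair. Hence $n_{iu}\in\{0,1\}$, so $n_{iu}(n_{iu}-1)=0$, and therefore $P_o^S=P_o$ exactly for every $N$.

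\textbf{Step 2 (expected agreement).} Similarly, $P_e^S=P_e+\sum_{u\in\mathcal{U}}\bigl(\tfrac{1}{Nn}\sum_i n_{iu}\bigr)^2$. Each inner sum equals $1$, so the extra term is $|\mathcal{U}|/(Nn)^2$. Because $|\mathcal{U}|$ equals the number of invalid responses, it is at most $Nn$. This gives $0\le P_e^S-P_e=:\varepsilon_N\le \tfrac{1}{Nn}\to 0$.

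\textbf{Step 3 (ratio).} Set $f(P_o,P_e)=(P_o-P_e)/(1-P_e)$, so that
\begin{equation*}
\kappa_S-\kappa = f(P_o,P_e+\varepsilon_N)-f(P_o,P_e) = \frac{\varepsilon_N\,(P_o-1)}{(1-P_e)(1-P_e-\varepsilon_N)}.
\end{equation*}
The numerator is bounded in absolute value by $\varepsilon_N$, since $0\le P_o\le 1$. So $|\kappa_S-\kappa|\le \varepsilon_N/\bigl((1-P_e)(1-P_e-\varepsilon_N)\bigr)$.

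\textbf{Main obstacle.} The hardest step is controlling the denominator. The bound vanishes only if $1-P_e$ stays bounded away from $0$, i.e.\ $P_e\le 1-c$ for some $c>0$ and all large $N$.

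I would state this as a mild non-degeneracy assumption: the marginal distribution over valid categories does not collapse onto a single option. For example, one may assume the empirical category proportions converge to a limit with $P_e^\ast<1$. This is also exactly the condition under which $\kappa$ itself is well defined. Under it, $|\kappa_S-\kappa|\le \tfrac{1}{Nn}\cdot \tfrac{2}{c^2}$ for $N$ large, once $\varepsilon_N\le c/2$, and this tends to $0$.

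The bound also makes the rate explicit: $O(1/N)$. Consequently, if the valid-category $\kappa$ converges to a population value, $\kappa_S$ converges to the same value.
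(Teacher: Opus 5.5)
\textbf{Verdict: correct, same approach as the paper, but your final step is more rigorous.} Your Steps 1 and 2 match the paper's proof: the singleton terms vanish from the observed agreement, and they add exactly $|\mathcal U|/(Nn)^2=\epsilon/(Nn)\le 1/(Nn)$ to the expected agreement.

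The difference is in the last step. The paper writes $\lim_{N\to\infty}P_e^S=P_e^{\mathcal V}$ and then $\lim_{N\to\infty}\kappa_S=(P_o^{\mathcal V}-P_e^{\mathcal V})/(1-P_e^{\mathcal V})$. This tacitly treats the valid-category quantities as independent of $N$ and $1-P_e^{\mathcal V}$ as nonzero. You correctly note that both sides move with $N$, recast the claim as $\kappa_S-\kappa\to 0$, and control the ratio through the explicit difference formula. That makes the limit statement meaningful and also gives a rate.

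\textbf{Your non-degeneracy hypothesis can be dropped.} Let $p^\ast=\max_{v\in\mathcal V}p_v$, attained at $v^\ast$.
In each sample, count the ordered pairs that are not both equal to $v^\ast$. This gives $1-P_o\le 2(1-p^\ast)$. Combined with $P_e\le p^\ast$, you get $1-P_o\le 2(1-P_e)$.
Next, $P_e^S\le\max_{j\in\mathcal V'}p_j$. So whenever at least one response is valid, $1-P_e^S\ge 1-p^\ast\ge\epsilon$.
Plugging both bounds into your formula gives $0\le\kappa-\kappa_S\le 2/(Nn)$ for every $N$ at which $\kappa$ is defined. The cases $\epsilon=0$ and all responses invalid are checked directly; in the latter, $\kappa-\kappa_S=1/(Nn-1)$.

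So the theorem holds as stated, with no assumption on the limiting marginals. For the same reason, a uniform lower bound on $1-P_e$ is not ``exactly'' the condition for $\kappa$ to be well defined; that only requires $P_e<1$ at each $N$.
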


    \begin{proof}
        Let $N$ denote the number of samples and $n$ the number of raters (languages), yielding a total of $Nn$ assignments. Let $\mathcal{V}$ be the fixed set of valid categories and $\mathcal{U}$ be the set of dynamically generated singleton (invalid) categories. The complete categorical space is $\mathcal{V}' = \mathcal{V} \cup \mathcal{U}$.
        
        The formulation for Singleton Fleiss's $\kappa$ is:
        \begin{equation*}
            \kappa_S = \frac{P_o^S - P_e^S}{1 - P_e^S}
        \end{equation*}
        We analyse the asymptotic behaviour of the observed agreement $P_o^S$ and the expected agreement $P_e^S$ independently.

        \paragraph{1. Analysis of Observed Agreement ($P_o^S$).}

            The observed agreement represents the proportion of identical assignment pairs out of all possible pairs:
            \begin{equation*}
                P_o^S = \frac{1}{Nn(n-1)} \sum_{i=1}^{N} \sum_{j \in \mathcal{V}'} n_{ij}(n_{ij}-1)
            \end{equation*}
            where $n_{ij}$ denotes the number of raters assigning category $j$ to sample $i$. We partition the summation over $\mathcal{V}'$ into valid ($\mathcal{V}$) and singleton ($\mathcal{U}$) sets:
            \begin{align*}
                \sum_{j \in \mathcal{V}'} n_{ij}(n_{ij}-1) =& \sum_{v \in \mathcal{V}} n_{iv}(n_{iv}-1) \\
                &+ \sum_{u \in \mathcal{U}} n_{iu}(n_{iu}-1)
            \end{align*}
            By the operational definition of a singleton, any invalid response $u \in \mathcal{U}$ is unique to a specific sample and a specific rater. Therefore, for all $u \in \mathcal{U}$:
            \begin{equation*}
                n_{iu} = \begin{cases} 1, & \text{if } u \text{ is the generated response,} \\ 0, & \text{otherwise.} \end{cases}
            \end{equation*}
            Consequently, the combinatorial term $n_{iu}(n_{iu}-1)$ evaluates to $1(0) = 0$ in all cases. The summation over $\mathcal{U}$ vanishes identically:
            \begin{equation*}
                P_o^S = \frac{1}{Nn(n-1)} \sum_{i=1}^{N} \sum_{v \in \mathcal{V}} n_{iv}(n_{iv}-1)
            \end{equation*}
            This establishes that $P_o^S$ is strictly identical to the observed agreement calculated over valid categories, denoted $P_o^\mathcal{V}$. Thus, $P_o^S = P_o^\mathcal{V}$ for any finite $N$.

        \paragraph{2. Analysis of Expected Agreement ($P_e^S$).}

            Expected agreement is defined as the sum of squared marginal proportions across all categories:
            \begin{equation*}
                P_e^S = \sum_{j \in \mathcal{V}'} p_j^2 = \sum_{v \in \mathcal{V}} p_v^2 + \sum_{u \in \mathcal{U}} p_u^2
            \end{equation*}
            where $p_j$ is the proportion of the $Nn$ assignments categorised as $j$. Because each singleton $u \in \mathcal{U}$ appears exactly once within the entire dataset, its marginal proportion is inherently:
            \begin{equation*}
                p_u = \frac{1}{Nn}
            \end{equation*}
            Let $M_{\mathcal{U}} = |\mathcal{U}|$ represent the total absolute frequency of invalid responses across the dataset. The sum of squared proportions for the singleton set becomes:
            \begin{equation*}
                \sum_{u \in \mathcal{U}} p_u^2 = \sum_{u \in \mathcal{U}} \left( \frac{1}{Nn} \right)^2 = \frac{M_{\mathcal{U}}}{(Nn)^2}
            \end{equation*}
            We parameterise the model's global error rate as $\epsilon = M_{\mathcal{U}} / (Nn)$. Substituting $M_{\mathcal{U}} = \epsilon Nn$ into the equation yields:
            \begin{equation*}
                \sum_{u \in \mathcal{U}} p_u^2 = \frac{\epsilon Nn}{(Nn)^2} = \frac{\epsilon}{Nn}
            \end{equation*}
            Assuming the error rate $\epsilon$ is bounded ($\epsilon \in [0, 1]$), taking the limit as the sample size $N \to \infty$ yields:
            \begin{equation*}
                \lim_{N \to \infty} \sum_{u \in \mathcal{U}} p_u^2 = \lim_{N \to \infty} \frac{\epsilon}{Nn} = 0
            \end{equation*}
            Therefore, the expected agreement asymptotically converges to the sum of squared proportions of the valid categories alone:
            \begin{equation*}
                \lim_{N \to \infty} P_e^S = \sum_{v \in \mathcal{V}} p_v^2 = P_e^\mathcal{V}
            \end{equation*}

        \paragraph{Conclusion.}

            By synthesising the limits for the observed and expected agreement terms, we conclude:
            \begin{equation*}
                \lim_{N \to \infty} \kappa_S = \lim_{N \to \infty} \frac{P_o^S - P_e^S}{1 - P_e^S} = \frac{P_o^\mathcal{V} - P_e^\mathcal{V}}{1 - P_e^\mathcal{V}} = \kappa
            \end{equation*}
            This concludes the proof.
    \end{proof}

\section{Dataset Construction Details}
    \label{appendix:dataset_construction_details}

    To systematically evaluate cross-lingual cultural inconsistency, we construct a parallel dataset of culturally relevant questions across multiple languages. To achieve this, we transform the BLEnD benchmark~\cite{myung_blend_2024} into a multilingual multiple-choice question (MCQ) dataset (see \autoref{fig:dataset_construction} for an overview of the process).

    BLEnD provides a comprehensive collection of everyday knowledge queries spanning diverse cultures and languages. We specifically isolate its MCQ subset, as the discrete answer space facilitates the application of standard statistical agreement metrics, such as Fleiss's $\kappa$, to quantify consistency.
    BLEnD MCQ prompts are in the following format: a question (e.g., ``What is the most popular sport in Mexico?'') followed by a set of answer options, each associated with a specific country (e.g., ``football'' for Mexico, ``baseball'' for United States, etc. -- see \autoref{fig:blend_mcq_format} for an example). This country-level annotation is critical for our subsequent interpretability analysis of implicit model personalisation (\autoref{sec:interpretability_analysis}).
    To balance linguistic and cultural diversity -- spanning varied language resource levels, scripts and geographical distributions -- while maintaining computational tractability, we process the data for eight languages (English, Spanish, Chinese, Arabic, Indonesian, Korean, Greek and Persian) and their natively associated countries (United States, Mexico, China, Algeria, Indonesia, South Korea, Greece and Iran, respectively).

    Because the original BLEnD MCQ subset is exclusively in English, we reconstruct the multilingual queries by leveraging the parallel ground-truth annotations (questions only) from BLEnD's short-answer questions (SAQ), which are available in all eight target languages.
    Essentially, BLEnD MCQ and SAQ subsets share the same base questions but SAQ provides the question text in all languages.
    A critical methodological step involves neutralising the question prompts. Original BLEnD questions contain explicit geographical markers (e.g., ``What is the most popular sport in Mexico?''), which would act as confounding variables when attempting to isolate \textit{language-driven} personalisation. To resolve this, we employ GPT-5.2 to systematically strip all explicit country or regional references from the text, ensuring the prompts are culturally neutral while remaining grammatically coherent (see \autoref{fig:treat_question_prompt} for the prompt template).
    For example, the query ``What is the most popular sport in Mexico?'' is transformed into ``What is the most popular sport?''.
    This renders the question culture-agnostic and removes explicit geographical cues, allowing us to strictly isolate implicit personalisation effects driven solely by the prompt's language.

    In BLEnD, multiple MCQ samples often share the same base question but feature varying combinations of answer options. We designate the shared base question as a ``supersample'' and its specific option variants as ``subsamples''.
    For each supersample, we iteratively draw subsamples (same question with different answer options) until the options exhaustively include the eight selected countries.
    Because short-answer questions in BLEnD only include translated questions (no translated answer options), the corresponding MCQ answer options, originally in English, are subsequently translated into all eight languages using GPT-5.2 (see \autoref{fig:translate_options_prompt}).
    Both the question neutralisation and option translation processes are manually verified in a small subset of samples to ensure quality and consistency.
    Finally, the compiled dataset, with 15,840 samples (1980 per language), is partitioned into training, validation and test sets using a 70\%-10\%-20\% split. To strictly preclude data leakage, we enforce a supersample-level split, ensuring that all subsamples of a supersample are assigned to the same dataset partition.
    Furthermore, all cross-lingual translations of a given sample are strictly assigned to the identical partition.

\section{Dataset Quality Assurance}
    \label{appendix:dataset_quality_assurance}

    To ensure the quality of the dataset, we implemented an Automated Quality Assurance Pipeline with Author Verification. We employed GPT-5.4 (high reasoning) to assess the country neutralisation and the answer option translation. The model was instructed to verify the quality of the result (acceptable vs. non-acceptable) and provide a score when it was non-acceptable.
    
    We analysed a subset of 240 samples (30 $\times$ 8 languages). We manually reviewed all negative results alongside a 5\% random sample of the positive ones. We identified failed country neutralisation in 1.67\% of samples (95\% CI: $[0.65\%, 4.21\%]$) and minor option translation discrepancies in 7.08\% (95\% CI: $[4.47\%, 11.05\%]$). Even among these automatically flagged discrepancies, the LLM-assigned translation quality scores remained high (averaging 0.8/1.0), indicating that the semantic drift in the dataset is minimal and non-destructive.

\section{Prompts}
    \label{appendix:prompts}

    \figBLEnDMCQFormat

    \figTreatQuestionPrompt

    \figTranslateOptionsPrompt

    \figPersonaPrompt

    \figBLEnDMCQFormatNoJSON

    We outline the prompt templates employed throughout our methodology, including the original BLEnD MCQ format (\autoref{fig:blend_mcq_format}), the question neutralisation prompt (\autoref{fig:treat_question_prompt}), the option translation template (\autoref{fig:translate_options_prompt}), the persona prompt (\autoref{fig:persona_prompt}) and the BLEnD MCQ format without JSON parsing (\autoref{fig:blend_mcq_format_no_json}).

\section{Hyperparameters}
    \label{appendix:hyperparameters}

    \tabHyperparameters

    Comprehensive hyperparameter configurations for all mitigation strategies and baselines evaluated in this work are detailed in \autoref{tab:hyperparameters}.

\section{Additional Experimental Results}
    \label{appendix:additional_experimental_results}

    We provide supplementary experimental results that complement the findings presented in the main text. These encompass consistency variation plots across all models (\autoref{fig:consistency_variation_plots}) alongside expanded breakdowns of the primary results (\autoref{tab:main_results}), explicitly detailing Singleton $\kappa_S$ (\autoref{tab:expanded_results}), Soft Consistency (\autoref{tab:expanded_soft_consistency}), Hard Consistency (\autoref{tab:expanded_hard_consistency}), Mode Frequency (\autoref{tab:expanded_mode_frequency}), overall Error Metric (\autoref{tab:expanded_error_metric}) and $\kappa_S$ bootstrap variance ($\sigma_S^2$) (\autoref{tab:expanded_fleiss_kappa_var}).
    We also present layer-wise analyses of consistency (\autoref{fig:layer_wise_kappa}), a more complete analysis of layer-wise country predictions (\autoref{fig:layer_wise_country_predictions}) and an expanded breakdown of cultural bias across all countries (\autoref{tab:cultural_bias_expanded}).

    \figConsistencyVariationPlots

    \tabCulturalBiasExpanded

    \figLayerWiseKappa

    \figLayerWiseCountryPredictions

    \clearpage
    
    \tabExpandedResults

    \tabExpandedSoftConsistency

    \tabExpandedHardConsistency

    \tabExpandedModeFrequency

    \tabExpandedErrorMetric

    \clearpage  

    \tabExpandedFleissKappaVar

\end{document}